\documentclass[letterpaper, 10 pt, conference]{ieeeconf}

\usepackage{style}

\IEEEoverridecommandlockouts                         
\title{Statistical Convergence of Transformer Encoder-Accelerated Robust Reinforcement Learning}

\author{Suman Banerjee and Hiroyasu Tsukamoto\thanks{This work benefited from technical discussions within DARPA’s Safe and Assured Foundation Robots for Open eNvironments (SAFRON) Program, under contract number HR0011-25-3-0331. The authors are with the Department of Aerospace Engineering, The Grainger College of Engineering, University of Illinois Urbana-Champaign, Urbana, IL, USA (email: \texttt{sumanb2@illinois.edu, hiroyasu@illinois.edu})}}
\begin{document}

\maketitle
\thispagestyle{empty}
\pagestyle{empty}
\begin{abstract}
Obtaining the optimal action-value function in Markov decision processes is computationally intensive in large state--action spaces. In this study, we present statistically rigorous convergence results for a robust reinforcement learning algorithm warm-started by a transformer-based action-value function prediction, where natural language prompts encode task specifications. Our framework adopts the R-contamination model to characterize uncertainty in the state transition kernel, and employs conformal prediction to certify convergence via trajectory-level nonconformity scores constructed from the contracting Bellman residual. The resulting conformal quantile bounds the gap between the running and optimal action-value functions simultaneously over all iterations, thereby yielding a pre-certified stopping rule that requires little knowledge of the true transition kernel. Numerical case studies on perturbed maze environments of varying size and contamination level confirm that the transformer-based warm start measurably reduces the initial error and accelerates convergence, while the proposed conformal bounds track the true error trajectory more tightly than existing guarantees.

\end{abstract}

\vspace{-1.5mm}
\section{Introduction}
The framework of robust reinforcement learning (RL)~\cite{RL_SB,bertsekas1996neuro,NEURIPS2020_4eab60e5,Robust_RL_Rcont} faces the challenge of poor sample efficiency and slow convergence arising from poor initialization in large state--action spaces. Randomly initialized action-value functions must recover all structural information about their MDPs entirely through repeated trials, requiring an impractical amount of exploration followed by a series of Bellman updates before meaningful convergence~\cite{bellman_original,bertsekas1996neuro}.

Structured environments, however, are not featureless. Visual and natural language descriptions of environments and tasks may carry underlying semantic information of the tasks, which is usually discarded in nominal RL algorithms. One could naturally imagine that a well-informed initial prediction of the action-value function through learning-based encodings of such information would accelerate convergence. This work attempts to make that intuition precise by establishing theoretically rigorous, non-conservative bounds on the convergence rate that are both pre-certifiable and practically computable from finite data samples of contracting Bellman residuals. 
In particular, we fine-tune a pretrained transformer encoder~\cite{devlin2019bert} via supervised regression~\cite{hu2022lora} on action-value function tables generated offline, producing an initial estimate that encodes task-relevant structure before any environment interaction occurs.

\begin{figure}[h]
    \centering
    \includegraphics[width=0.37\textwidth]{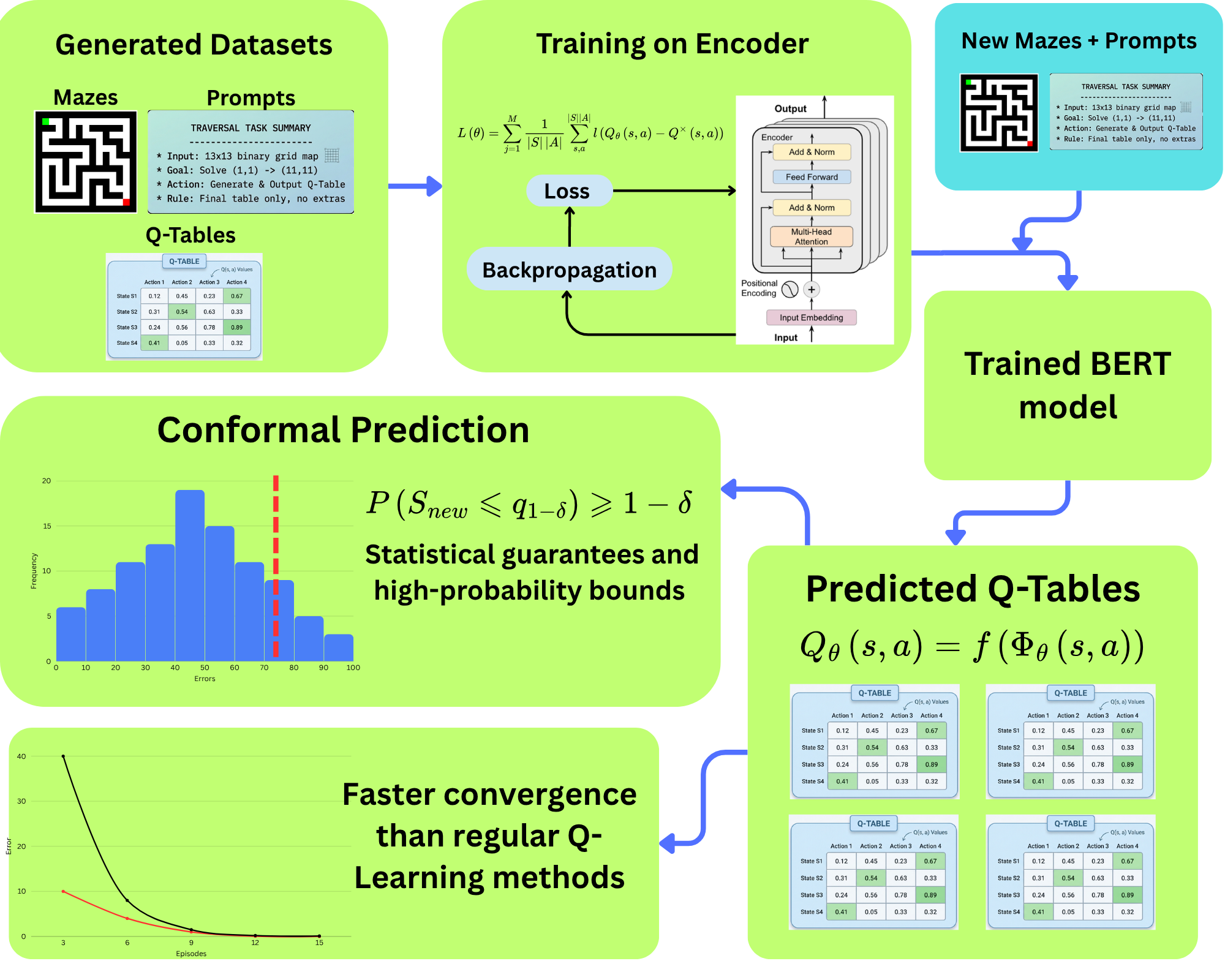}
    \vspace{-1mm}
    \caption{Overview of the proposed framework: using transformer encoder-based Q-function approximation with trajectory-wise conformal prediction to provide time-dependent and high-probability bounds on Q-Learning errors in $R-$contaminated RL environments.}
    \label{fig:overview}
    \vspace{-7mm}
    \setlength{\textfloatsep}{5pt}
    \setlength{\intextsep}{5pt}
\end{figure}
 Our framework considers the $R$-contamination model~\cite{Robust_RL_Rcont,Rcont} to characterize uncertainty in the state transition kernel, and employs conformal prediction to certify convergence via trajectory-level nonconformity scores constructed from the contracting Bellman residual. Our contributions include:
\begin{enumerate}
    \item We employ conformal prediction to statistically quantify the validity of transformer encoder-based approximations of the optimal action-value function, thus providing a formal tool to analyze the improvement of convergence behavior in the $R$-contaminated environments.
    \item We introduce a new trajectory-wise conformal prediction framework that yields tighter, finite-sample, distribution-free bounds on the gap between the running and optimal action-value functions, certifiable solely from observed Bellman residuals. 
    \item We demonstrate the effectiveness of these guarantees using bidirectional encoder representations from transformers (BERT) in perturbed maze environments of varying sizes and contamination levels.

\end{enumerate}

\subsubsection*{Related work}
Classical RL algorithms may assume that the environment used to train agents is identical to the environment in which the agent will be deployed~\cite{RL_SB}. However, this assumption is often violated in practice. Agents trained in simulation environments may encounter state transitions and rewards that differ from the ones of the real world, causing the learned policy to perform poorly upon deployment. This mismatch, commonly referred to as the sim-to-real gap, has motivated the study of robust Markov decision processes (RMDPs)~\cite{Uncertain_MDP, Robust_MDP}, where the true transition kernel is assumed to lie within an uncertainty set around a nominal model, defined via distance measures such as total variation, KL divergence, or the R-contamination model~\cite{Rcont}.
The goal of robust RL~\cite{Robust_RL_Rcont, Wang_Robust_PG, ghosh2025orvit, Robust_MARL} is to learn a policy that maximizes the return under worst-case transitions in the uncertainty sets. In~\cite{Robust_RL_Rcont}, ~\cite{Wang_Robust_PG}, the $R$-contamination model is used to derive Q-learning and policy gradient algorithms for the robust MDP, while~\cite{ghosh2025orvit} considers total variation and KL divergence uncertainty sets to propose distributionally robust value iteration. Other formulations treat model uncertainty as adversarial attacks~\cite{Robust_Adversarial_RL}, casting the robust RL as a two-player zero-sum discounted game. 

On the uncertainty quantification side, conformal prediction~\cite{conformal_original} provides a distribution-free, finite-sample statistical framework that produces coverage guarantees for arbitrary prediction models based on rank statistics of nonconformity scores, requiring only exchangeability of the data. The use of conformal prediction as a safety filter for RL in dynamic environments has been discussed in \cite{strawn2023conformal}, while \cite{gupta2023cammarl} applies conformal prediction on actions of multiple agents in RL to influence the action of a target agent. 
On the representation learning side, transformer encoder architectures have shown promise in RL settings due to their ability to capture long-range dependencies in sequential data~\cite{vaswani2017attention,voita2019analyzing,voita2019bottom}. Their self-attention mechanism allows encoding global context that reflects spatial, geometric, and semantic relationships across state representations. In~\cite{banino2021coberl}, the use of BERT to improve data efficiency using a novel combination of a gated transformer-XL~\cite{parisotto2020stabilizing} and long short-term memories~\cite{6795963} has been discussed, while in~\cite{RAJ2024100040}, transformer-based encodings are employed in language understanding tasks to detect discriminatory remarks across different languages, hence promoting safer online interactions. 
Since RL involves sequential decision-making with correlated, long-term trajectories, transformer architectures are well-suited for one-shot modeling of dependencies arising in such settings, capturing the structural patterns of the underlying MDP from task descriptions alone without requiring iterative environment interaction.
These results suggest that transformer encoders may offer useful inductive biases for structured RL tasks, though their theoretical integration with robust Q-learning convergence has not been previously addressed.

\\

\section{Preliminaries}
\label{sec:preliminaries}
\subsection{Robust Markov Decision Processes (RMDP)}
\label{RMDP}

Recall that an MDP is defined by a tuple $(\mathcal{S}, \mathcal{A}, \overline{\mathcal{P}}, r, \gamma)$, where $\mathcal{S}$ is the state space, $\mathcal{A}$ is the action space, $\overline{\mathcal{P}} = \{ \bar{P}(\cdot|s,a) \in \Delta_{|\mathcal{S}|} : s \in \mathcal{S}, a \in \mathcal{A} \}$ is the nominal transition kernel, $r(s,a): \mathcal{S}\times\mathcal{A}\to [0,1]$ is the reward function, and $\gamma \in [0,1)$ is the discount factor. Here, $\Delta_n$ denotes the $(n-1)$-dimensional probability simplex. To account for uncertainties in the state transition kernels, we consider a robust MDP~(RMDP)~\cite{Robust_MDP}. This is also defined by a tuple $(\mathcal{S}, \mathcal{A}, \mathcal{P}, r, \gamma)$, except that the true transition kernel is unknown and lies in the uncertainty set denoted by $\mathcal{P}$ given as $\mathcal{P}\coloneqq \bigotimes_{s \in \mathcal{S},\, a \in \mathcal{A}} \mathcal{U}(s,a)$
where $\mathcal{U}(s,a)$ is the uncertainty set of the transition kernel defined with respect to the state-action pair $(s,a)$ and $\bigotimes$ denotes the product measure. For a fixed policy $\pi$, the robust state value function for the RMDP is the discounted cumulative reward under the worst-case transitions, given by
$
\label{eq:robust-v}
V^\pi(s)
= \min_{P\in\mathcal{P}} \mathbb{E}_{\pi, P} [ \sum_{t=0}^\infty \gamma^t r(S_t, A_t)
\;|\; S_0 = s ]. 
$
Similarly, the robust action-value function is given by
$
Q^\pi(s,a)
= \min_{P\in\mathcal{P}} \mathbb{E}_{\pi, P} [ \sum_{t=0}^\infty \gamma^t r(S_t, A_t)
\;|\; S_0 = s, A_0 = a ].\nonumber
$
The goal of robust RL is to find the optimal robust policy $\pi^\star$ such that $\pi^\star \coloneqq \arg \max_\pi V^\pi(s)$. For notational simplicity, we denote $V^\star(s) \coloneqq V^{\pi^\star}(s)$ and $Q^\star(s,a) \coloneqq Q^{\pi^\star}(s,a)$.
The optimal robust value functions satisfy the robust Bellman equation, given by $V^\star(s) = \max_{a \in \mathcal{A}} Q^\star(s,a)$ and $Q^\star(s,a) = \mathcal{T}^\star Q^\star(s,a) \coloneqq r(s,a) + \gamma \sigma_{\mathcal{U}(s,a)}(V^\star)$, where $\sigma_{\mathcal{U}(s,a)}(V)\coloneqq \min_{P(\cdot|s,a)\in\mathcal{U}(s,a)} \mathbb{E}_{s'\sim P(\cdot|s,a)}V(s')$ denotes the worst-case expected next value. It is shown in~\cite{Robust_MDP} that the robust Bellman operator $\mathcal{T}^\star Q(s,a) \coloneqq r(s,a) + \gamma \sigma_{\mathcal{U}(s,a)}(V)$ is a contraction mapping and $Q^\star$ is its fixed point. Therefore, if the uncertainty set $\mathcal{P}$ is known exactly, then $Q^\star$ can be solved by dynamic programming. 
\subsection{$R$-Contamination Model and Robust Q-Learning}
\label{sec:robust_Q}
Since the exact form of $\mathcal{P}$ is rarely known in practice, we consider the $R$-contamination model~\cite{Robust_RL_Rcont,Rcont} to construct a tractable uncertainty set $\mathcal{P}$. First, let $\bar{P}(\cdot|s,a)$ be the nominal transition model, which generates sample trajectories for learning.
The $R$-contamination uncertainty set is defined as:
$
\label{eq:Rcont}
    \mathcal{U}(s,a) = \left\{ (1-R)\, \bar{P}(\cdot|s,a) + R\, q \;|\; q \in \Delta_{|\mathcal{S}|} \right\}
$ where $R \in [0,1]$ is a parameter that controls the size of the uncertainty. In general, a larger $R$ corresponds to larger uncertainty, leading to a more conservative optimal value function. In our problem setting, we assume that $R$ is known but $\bar{P}(\cdot|s,a)$ is \emph{unknown}, and thus we estimate $\mathcal{U}(s,a)$ online using maximum likelihood estimation (MLE) as follows.

Given an observed state transition $(s,a,s')$, the MLE estimate is $\widehat{P}(\cdot|s,a)\coloneqq \mathds{1}_{s'}$, which is an unbiased estimate of $\bar{P}(\cdot|s,a)$ because the expectation of the indicator function of an event is exactly the probability of that event. 
Then, the data-driven, estimated $R$-contaminated uncertainty set can be written as
$
\label{eq:Rcont_est}
    \widehat{\mathcal{U}}(s,a) = \big\{ (1-R)\mathds{1}_{s'} + R q \;|\; q \in \Delta_{|\mathcal{S}|} \big\}
$
which yields the following estimated worst-case next value: 
$
\label{eq:Rcont_next}
    \sigma_{\widehat{\mathcal{U}}(s,a)}(V) \coloneqq R \min_{s''\in \mathcal{S}} V(s'') + (1-R)V(s').
$ For $R$-contamination models, the regular Q-learning, which does not account for the uncertainty in the transition kernel, can yield a learned optimal policy that degrades significantly in the true environment under such uncertainty. The robust Q-learning algorithm was proposed by~\cite{Robust_RL_Rcont} to address this issue by explicitly incorporating the $R$-contamination uncertainty set into Q-learning. 
It is implemented in an online episodic fashion, using multiple trajectory data samples collected under some exploratory policy $\pi_b(\cdot|s)$ (e.g., $\epsilon$-greedy policy). With the robust MDP introduced earlier, the temporal-difference update of robust Q-learning can be written as $Q_{t+1}(s,a) = (1-\alpha_t) Q_t(s,a)+ \alpha_t (r + \gamma \sigma_{\widehat{\mathcal{U}}(s,a)}\left(\textstyle\max_{a'\in\mathcal{A}} Q_t(s,a') \right) )$, where $\alpha_t$ denotes the learning rate and $t$ is the iteration index. Therefore, by definition of $\sigma_{\widehat{\mathcal{U}}(s,a)}$, this becomes
\begin{align}
    Q_{t+1}(s,a) &= (1-\alpha_t) Q_t(s,a) + \alpha_t (r + \gamma R\textstyle \min_{s''\in \mathcal{S}} V_t(s'') \nonumber \\
&+ \gamma(1-R)V_t(s') )
\label{eq:Q_update_Rcont}
\end{align}
%
As can be seen, robust Q-learning is a model-free, online, and off-policy RL algorithm, which has the following convergence property. 
\begin{lemma} [Robust Q-learning~\cite{bertsekas1996neuro,Robust_RL_Rcont,NEURIPS2020_4eab60e5}]
\label{th:Q_converge}
If a constant learning rate $\alpha_t = \alpha$ is used in~\eqref{eq:Q_update_Rcont}, then for any $\phi\in(0,1)$ and $\epsilon\in\left(0, \tfrac{1}{1-\gamma}\right)$, there exist constants $\hat{c}$ and $c_0$ such that the following bound holds for all $t\leq T$ at least with probability $1-6\phi$:
$
    \Delta Q_t
\leq
\underbrace{\tfrac{(1-\rho)^{k}}{1-\gamma}\Delta Q_0}_{\text{contracting term}}
+
\underbrace{\tfrac{5\hat{c}\gamma}{1-\gamma}
\sqrt{\alpha\log\left(\tfrac{|\mathcal{S}||\mathcal{A}|T}{\phi}\right)}
+\epsilon}_{\text{steady-state error term}}
$
as long as the learning rate satisfies $0<\alpha \log\left(\tfrac{|\mathcal{S}||\mathcal{A}|T}{\phi}\right)<1$ and the time horizon satisfies
$
T \geq \tfrac{c_0}{\mu_{\min}} \Big(
\tfrac{1}{(1-\gamma)^5 \epsilon^2} + \tfrac{t_{\mathrm{mix}}}{1-\gamma}
\Big)
\log\left(\tfrac{T|\mathcal{S}||\mathcal{A}|}{\phi}\right)\log\left(\tfrac{1}{\epsilon(1-\gamma)^2}\right).
$
The notation is summarized in Table~\ref{tab:finite_time_terms}.
\end{lemma}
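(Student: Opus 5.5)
The plan is to reduce the statement to the finite-time analysis of asynchronous stochastic approximation with Markovian noise (as in \cite{NEURIPS2020_4eab60e5}). First, the robust update \eqref{eq:Q_update_Rcont} is rewritten in the asynchronous form
\begin{align*}
Q_{t+1}(s,a) = Q_t(s,a) + \alpha\,\mathds{1}\{(s_t,a_t)=(s,a)\}\big(\widehat{\mathcal{T}}_t Q_t(s,a) - Q_t(s,a)\big),
\end{align*}
where $\widehat{\mathcal{T}}_t Q(s,a) = r(s,a)+\gamma R\min_{s''}V(s'')+\gamma(1-R)V(s_{t+1})$. Because $\widehat{P}=\mathds{1}_{s'}$ is unbiased for $\bar P$, we have $\mathbb{E}[\widehat{\mathcal{T}}_tQ\mid s_t,a_t]=\mathcal{T}^\star Q$, with $\mathcal{U}$ the $R$-contamination set. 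The map $V\mapsto R\min V+(1-R)\bar P V$ is monotone and $1$-Lipschitz in $\|\cdot\|_\infty$, so $\mathcal{T}^\star$ is a $\gamma$-contraction whose fixed point is $Q^\star$, as recalled from \cite{Robust_MDP}. The iterates also stay uniformly bounded: starting from $\|Q_0\|_\infty\le 1/(1-\gamma)$, the constant step $\alpha\le1$ gives $\|Q_t\|_\infty\le 1/(1-\gamma)$ by induction. This supplies the $1/(1-\gamma)$ scalings that appear in the bound.

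Next I would split the error $Q_t-Q^\star$ into a deterministic contraction part and a noise part. Writing $\Lambda_t$ for the diagonal visitation indicator, one obtains
\begin{align*}
Q_{t+1}-Q^\star=(I-\alpha\Lambda_t)(Q_t-Q^\star)+\alpha\Lambda_t(\mathcal{T}^\star Q_t-\mathcal{T}^\star Q^\star)+\alpha\Lambda_t(\widehat{\mathcal{T}}_t-\mathcal{T}^\star)Q_t .
\end{align*}
Under the exploratory policy $\pi_b$ the chain mixes, so after about $t_{\mathrm{mix}}$ steps every pair is visited at a rate near $\mu_{\min}$. I would therefore group time into blocks of length proportional to $t_{\mathrm{mix}}\log(|\mathcal{S}||\mathcal{A}|T/\phi)/\mu_{\min}$. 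A concentration bound for Markov chains (in the Paulin/Hoeffding style) then guarantees, with probability $1-\phi$, that each pair is updated roughly $\mu_{\min}$ times the block length in every block. Over each block the contraction part shrinks the sup-norm error by a factor $(1-\rho)$, with $\rho\approx(1-\gamma)\alpha\mu_{\min}\cdot$(block length), so $k$ counts the completed blocks up to time $t$. Unrolling this recursion produces the contracting term $(1-\rho)^k\Delta Q_0/(1-\gamma)$.

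For the noise part, the terms $(\widehat{\mathcal{T}}_t-\mathcal{T}^\star)Q_t$ are bounded by $2\gamma/(1-\gamma)$. Conditioned on the visitation process they form a martingale difference sequence, up to a Markovian bias that the mixing argument controls. I would apply Freedman's inequality coordinatewise and take a union bound over the $|\mathcal{S}||\mathcal{A}|T$ pairs and times. Because the steps are geometrically weighted by $\alpha(1-\alpha)^{j}$, the accumulated variance is of order $\alpha$, which gives $\hat c\gamma\sqrt{\alpha\log(|\mathcal{S}||\mathcal{A}|T/\phi)}/(1-\gamma)$. Propagating this through the contraction recursion contributes the factor $5$ together with the remaining failure events, for a total failure probability of $6\phi$. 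The residual $\epsilon$ absorbs the bias from mixing and from the initial burn-in; requiring $\epsilon$ to be reachable is exactly what imposes the lower bound on $T$ through the terms $1/((1-\gamma)^5\epsilon^2)$ and $t_{\mathrm{mix}}/(1-\gamma)$.

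The main obstacle is the correlation between the Markovian sampling noise and the iterates $Q_t$: the noise is not a martingale difference with respect to the natural filtration. I would handle this first by the standard trick from \cite{NEURIPS2020_4eab60e5} of conditioning on the entire state--action trajectory. Once the trajectory is fixed, only the next-state draws remain random, so the noise becomes a martingale difference again. The $R\min_{s''}V(s'')$ term is deterministic given $Q_t$ and so adds no noise. This is why the non-robust analysis transfers essentially verbatim, with constants changed only by the Lipschitz-$1$ property of $\sigma_{\mathcal{U}}$.
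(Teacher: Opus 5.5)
The paper gives no proof of its own. The lemma is imported from the constant-step finite-time bound for asynchronous Q-learning in \cite{NEURIPS2020_4eab60e5}, combined with the fact from \cite{Robust_RL_Rcont} that, for the $R$-contamination set, the sampled robust operator is conditionally unbiased and the robust operator is a $\gamma$-contraction. Your reduction follows that route, and the robust-specific pieces are fine: unbiasedness given $Q_t$, the deterministic $R\min_{s''}V(s'')$ term, boundedness of the iterates, and the frame argument producing $\rho$ and $k$.

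The gap is in the step you single out as the main obstacle. The lemma concerns single-trajectory Markovian sampling ($t_{\mathrm{mix}}$ and $\mu_{\min}$ refer to the chain induced by $\pi_b$). So the next state used at time $t$ is $s_{t+1}$, which is itself part of the state--action trajectory. Conditioning on that trajectory therefore leaves no randomness at all. Conditioning on the visitation process alone instead biases the law of the next states, because where the chain goes after $(s_t,a_t)$ is exactly what determines the later visits. Your trick works in the i.i.d.\ asynchronous model, where the sampled pair and the next-state draw are decoupled, but not here. Freedman in natural time does not rescue it either. After unrolling, the noise injected at time $\tau$ enters coordinate $(s,a)$ of $Q_t-Q^\star$ with weight $\alpha(1-\alpha)^{N}$, where $N$ is the number of visits to $(s,a)$ in $(\tau,t]$, a quantity that depends on the future trajectory. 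The ``Markovian bias controlled by mixing'' is also never quantified.

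The missing idea is to re-index each coordinate by its visit count. Fix $(s,a)$, let $t_k$ be its $k$-th visit time and $K_t$ the number of visits up to $t$. The noise in coordinate $(s,a)$ at time $t$ is then $\sum_{k=1}^{K_t}\alpha(1-\alpha)^{K_t-k}\xi_k$ with $\xi_k=\gamma(1-R)\big(V(s_{t_k+1})-\bar P(\cdot|s,a)^\top V\big)$. By the strong Markov property at the stopping time $t_k$, $s_{t_k+1}\sim\bar P(\cdot|s,a)$ independently of $\mathcal{F}_{t_k}$. This gives two usable versions:
\begin{itemize}
\item Take $V=V^\star$, as in \cite{NEURIPS2020_4eab60e5}, and move $\gamma R(\min V_t-\min V^\star)+\gamma(1-R)(V_t-V^\star)(s_{t+1})$, which is bounded by $\gamma\Delta Q_t$, into the contraction part. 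Then the $\xi_k$ are i.i.d.
\item Keep $V=V_{t_k}$ as in your decomposition. Then the $\xi_k$ form a martingale difference sequence with respect to $\{\mathcal{F}_{t_{k+1}}\}_k$.
\end{itemize}
For each fixed $K\le T$, the weights $\alpha(1-\alpha)^{K-k}$ are deterministic with squared sum at most $\alpha$, so Hoeffding/Azuma applies. A union bound over $K\in\{1,\dots,T\}$ and $(s,a)$ then yields the $\log(|\mathcal{S}||\mathcal{A}|T/\phi)$ factor. No mixing bias remains: mixing enters only through the per-frame visit counts. The $\epsilon$ enters through the burn-in time $t_{\mathrm{th}}$, chosen so that $(1-\alpha)^{\mu_{\min}t_{\mathrm{th}}/2}\le(1-\gamma)^2\epsilon$.

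Finally, check your constants. The $1/(1-\gamma)$ in front of $\Delta Q_0$ comes from solving the recursion $M\le\Delta Q_0+\tau+\gamma M$ for the error envelope, not from the iterate bound. With noise range $2\gamma/(1-\gamma)$, the same propagation gives a noise floor of order $\gamma\sqrt{\alpha\log(\cdot)}/(1-\gamma)^2$, not the unrolled term times a bare factor $5$. Matching the stated form therefore requires $\hat c$ to absorb $\|V^\star\|_\infty$.
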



\begin{table}[t]
\centering
\caption{Definitions of quantities in Lemma~\ref{th:Q_converge}}
\label{tab:finite_time_terms}
\renewcommand{\arraystretch}{1.15}
\scriptsize
\begin{tabular}{l l}
\hline
Quantity & Definition / meaning \\
\hline
$\Delta Q_t$ & $\coloneqq \|Q_t - Q^\star\|_\infty$ \\
$k$ & $\coloneqq \max\Big\{0,\left\lfloor\tfrac{t-t_{\mathrm{th}}}{t_{{\mathrm{frame}}}}\right\rfloor \Big\}$ \\
$\rho$ & $\coloneqq (1 - \gamma)\left(1 - (1 - \alpha)^{\mu_{\mathrm{frame}}}\right)$ \\
$\mu_{\mathrm{frame}}$ & $\coloneqq \tfrac{1}{2}\mu_{\min} t_{\mathrm{frame}}$ \\
$t_{\mathrm{th}}$ & $\coloneqq \max\{
\tfrac{2\log\left(\tfrac{1}{(1-\gamma)^2\epsilon}\right)}{\alpha\mu_{\min}},
\, t_{\mathrm{frame}}
\}$ \\

$\mu_{\min}$ & Minimum state-action visitation frequency under the sampling process \\

$t_{\mathrm{mix}}$ & Mixing time of the Markov chain induced by the behavior policy \\

$t_{\mathrm{frame}}$ & Frame length used in the blockwise finite-time analysis \\

\hline
\vspace{-9mm}
\end{tabular}
\end{table}

\subsection{Transformer Natural Language Encoders}
Transformer encoders are neural architectures designed to parse structured input data into a sequence of continuous representations using stacked feed-forward and multi-head attention layers~\cite{vaswani2017attention}. This architecture enables them to simultaneously learn short- and long-range contextual relationships of input data. When applied to natural language~\cite{transformer_nlp}, these encoders produce latent representations that distill the meaning of each token in relation to the context of the full input sequence. In this study, we fine-tune the BERT (Bidirectional Encoder Representations from Transformers)~\cite{devlin2019bert} model to predict the optimal action-value function obtained through the robust Q-learning of Sec.~\ref{sec:robust_Q}. Specifically, the model is trained to take as input a natural language prompt describing the reinforcement learning task specification and output the corresponding action-value function.

\subsection{Conformal Prediction}
Conformal prediction is a distribution-free framework for generating finite-sample coverage guarantees for arbitrary uncertainty predictors based on the rank statistics of nonconformity scores. If applied to a regression model $Y = \hat{f}(X)$, such scores may be defined as $s_i=\|\hat{f}(X_i)-Y_i\|,\forall i \in \{1,2,\cdots,N\}$, where $Y_i$ denotes the ground-truth output samples corresponding to the input samples $X_i$. For a prescribed failure probability $\delta \in (0,1)$, we define the conformal quantile $q_{1-\delta}$ as the $\lceil(1-\delta)(N+1)\rceil/N$-th smallest nonconformity score in $\{s_i\}_{i\in\{1,2,..,N\}}\cup\infty$, where $\lceil(\cdot)\rceil$ denotes the ceiling function.
\begin{lemma}[Conformal Prediction~\cite{conformal_original}]
\label{CP}
If, for any new sample $(X_{i_{new}},Y_{i_{new}})$, all the nonconformity scores in $\{s_i\}_{i \in \{1,2,..,N\}\cup\{i_{new}\}}$ are exchangeable, then a statistical bound on the prediction error may be postulated as follows:
$
    \mathbb{P}(\|\hat{f}(X_{i_{new}})-Y_{i_{new}}\|\leq q_{1-\delta})\geq 1-\delta
$
where $\mathbb{P}$ is the probability measure induced by the joint distribution of the data $\{(X_i,Y_i)\}_{i=1}^N$ and the new sample $(X_{i_{\mathrm{new}}},Y_{i_{\mathrm{new}}})$.
\end{lemma}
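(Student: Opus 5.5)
The plan is the standard rank-statistics argument for split conformal prediction, which uses only exchangeability of the $N+1$ scores. Write $s_{N+1}\coloneqq s_{i_{new}}$ and set $k\coloneqq\lceil(1-\delta)(N+1)\rceil$. If $k>N$, the quantile $q_{1-\delta}$ is $\infty$ by the convention of appending $\infty$ to the score set, and the bound holds trivially. So I will assume $k\le N$. Then $q_{1-\delta}$ is the $k$-th smallest element $s_{(k)}$ of $\{s_1,\dots,s_N\}$.

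\textbf{Step 1 (reduction to a rank event).} I will show that $s_{N+1}\le s_{(k)}$ holds if and only if $s_{N+1}$ is among the $k$ smallest of the augmented multiset $\{s_1,\dots,s_{N+1}\}$. To make this precise with ties, I break ties uniformly at random, independently of the data; this preserves exchangeability. Define the rank $\mathrm{rk}(s_{N+1})\in\{1,\dots,N+1\}$ of $s_{N+1}$ in the augmented set.
\begin{itemize}
\item If $\mathrm{rk}(s_{N+1})\le k$, then at most $k-1$ calibration scores lie strictly below $s_{N+1}$.
\item Hence $s_{N+1}\le s_{(k)}$.
\end{itemize}
This gives the inclusion $\{\mathrm{rk}(s_{N+1})\le k\}\subseteq\{s_{N+1}\le q_{1-\delta}\}$, which is the only direction needed.

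\textbf{Step 2 (uniform rank).} By exchangeability, the joint law of $(s_1,\dots,s_{N+1})$ is invariant under permutations. With the random tie-breaking, every index is therefore equally likely to occupy each rank position. Consequently $\mathrm{rk}(s_{N+1})$ is uniform on $\{1,\dots,N+1\}$, and
\begin{equation*}
\mathbb{P}\bigl(\mathrm{rk}(s_{N+1})\le k\bigr)=\frac{k}{N+1}\ge 1-\delta .
\end{equation*}
Combining this with Step 1 yields $\mathbb{P}(\|\hat{f}(X_{i_{new}})-Y_{i_{new}}\|\le q_{1-\delta})\ge 1-\delta$, as claimed.

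The main obstacle I expect is the careful handling of ties in Step 2. Without tie-breaking, exchangeability only gives $\mathbb{P}(\mathrm{rk}\le k)\ge k/(N+1)$ under a suitable convention for ties. I will argue this directly by symmetry: the indicators $\mathbb{1}\{s_j\le s_{(k)}^{(-j)}\}$ have equal expectation over $j$, and their sum is at least $k$ deterministically. Averaging over $j$ then gives the bound without invoking uniformity at all. This symmetry argument is the fallback if the tie-breaking route becomes awkward.

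I also note that the normalization $\lceil(1-\delta)(N+1)\rceil/N$ in the paper's definition is to be read as the $\lceil(1-\delta)(N+1)\rceil$-th order statistic, which is the reading used above.
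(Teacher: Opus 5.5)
Your argument is correct and is the standard rank-uniformity proof of split conformal coverage under exchangeability, which is exactly the result the paper imports by citing \cite{conformal_original} without giving a proof of its own. The $k>N$ case, your order-statistic reading of $q_{1-\delta}$, and your tie-robust fallback are all sound: at least $k$ indices $j$ deterministically satisfy $s_j\le s^{(-j)}_{(k)}$, so averaging over $j$ gives $\mathbb{P}(s_{N+1}\le q_{1-\delta})\ge k/(N+1)$. The only blemish is that the ``if and only if'' announced at the start of Step~1 is false under random tie-breaking (e.g.\ when all $N+1$ scores coincide), but you rightly use only the inclusion $\{\mathrm{rk}(s_{N+1})\le k\}\subseteq\{s_{N+1}\le q_{1-\delta}\}$, which does hold.
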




\section{Methods}

\subsection{Robust $Q$ Function Prediction from Natural Language}
Directly predicting the action-value function $Q$ in high-dimensional state spaces, particularly when the task is also represented by high-dimensional inputs, can be complex and could produce poor generalization. 
Raw information on states and task specifications often contains irrelevant features that may distract RL algorithms during extraction~\cite{zhang2020learning}, leading to inefficient training.
In this paper, we consider reinforcement learning tasks specified in natural language and hence introduce a transformer encoder $\Phi_{\theta}$, which first converts states $s$, actions $a$, and natural-language task specifications $\mathscr{L}$ into numerical token embeddings, and then encodes them into latent representations capturing task-relevant features. Here, $\theta$ denotes the parameters of the transformer. In particular, by selecting the output features to be the predicted optimal action-value function, $Q_\theta(s,a;\mathscr{L})$, we define $\Phi_{\theta}$ as:
\begin{align}
\label{eq_def_Qtheta}
    Q_\theta(s,a;\mathscr{L}) = (\psi_{\theta}\circ\varphi_{\theta})(s,a,\mathscr{L}), 
    \Phi_{\theta} \coloneqq \psi_{\theta}\circ\varphi_{\theta}
\end{align}
where $\varphi_{\theta}(s,a,\mathscr{L})\in\mathbb{R}^L$ yields the numerical embedding of length $L$, and $\psi_{\theta}$ denotes the encoder. Given $M$ data samples of the optimal action-value function $Q^*(s,a;\mathscr{L}_j)$ corresponding to sampled task specifications $\{\mathscr{L}_j\}^M_{j=1}$, the transformer $\Phi_{\theta}$ is trained by supervised regression using the following Huber loss over all state and action pairs:
$\mathcal{L}(\theta) \coloneqq \sum_{j=1}^M\tfrac{1}{|\mathcal{S}||\mathcal{A}|} \sum_{s,a} \ell(Q_\theta(s,a;\mathscr{L}_j) - Q^*(s,a;\mathscr{L}_j))$,
where $\ell(\Delta) \coloneqq \tfrac{1}{2}\Delta^2$ for $|\Delta|\le 1$ and $\ell(\Delta) \coloneqq |\Delta|-\tfrac{1}{2}$ otherwise.
Instead of training $\Phi_{\theta}$ from scratch over all tunable parameters in $\theta$, we initialize $\Phi_{\theta}$ with a pre-trained BERT~\cite{devlin2019bert} backbone and fine-tune selected parameters using LoRA~\cite{hu2022lora}. Since the embedding function $\varphi_{\theta}$ is often defined by linguistic rules and/or fixed structural representations (e.g., numerical indices associated with language tokens), the statistical properties of $ Q_\theta(s,a;\mathscr{L})$ for sampled task specifications can be characterized through $\varphi_{\theta}(s,a,\mathscr{L})\in\mathbb{R}^L$, yielding the following result.
\begin{lemma}
\label{lemma_transformer_cp}
Suppose that the parameter $\theta$ of the transformer~$\Phi_{\theta}$ is fixed after training, and we have $N$ data samples of natural-language task specifications, $\{\mathscr{L}_i\}^N_{i=1}$, not used for training. If, for any new sample $\mathscr{L}_{i_{new}}$, all the numerical embeddings in $\{\varphi_{\theta}(s,a,\mathscr{L}_i)\}_{i \in \{1,2,..,N\}\cup\{i_{new}\}}$ are exchangeable over all state and action pairs, then we have the following bound:
\begin{align}
    \label{eq_transformer_cp}
    \mathbb{P}\left\{\|Q_\theta(\mathscr{L}_{i_{new}}) - Q^*(\mathscr{L}_{i_{new}})\|_{\infty} \leq q_{1-\delta_0}\right\} \geq 1-\delta_0
\end{align}
where $\delta_0 \in (0,1)$ is the failure probability, $q_{1-\delta_0}$ is the $\lceil(1-\delta_0)(N+1)\rceil/N$-th smallest nonconformity score in $\{s_i\}_{i\in\{1,2,..,N\}}\cup\infty$ with $s_i \coloneqq \|Q_\theta(\mathscr{L}_{i}) - Q^*(\mathscr{L}_{i})\|_{\infty}$, and $\|(\cdot)\|_{\infty}$ denotes the infinity norm defined as
$\|Q_\theta(\mathscr{L}_{i}) - Q^*(\mathscr{L}_{i})\|_{\infty} \coloneqq \sup_{(s,a)\in\mathcal{S}\times\mathcal{A}}\|Q_\theta(s,a;\mathscr{L}_{i}) - Q^*(s,a;\mathscr{L}_{i})\|$
for the predicted/optimal action-value functions $Q_\theta$ and $Q^*$.
\end{lemma}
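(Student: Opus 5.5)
The plan is to reduce the statement to Lemma~\ref{CP} by viewing the map from task specification to nonconformity score as a fixed, deterministic function of the numerical embeddings. Since $\theta$ is frozen after training and the calibration samples $\{\mathscr{L}_i\}_{i=1}^N$ were not used in training, the regression model $\hat f \coloneqq Q_\theta(\cdot\,;\cdot)$ does not depend on the calibration data or on $\mathscr{L}_{i_{new}}$. We will identify the input $X_i$ with the whole collection of embeddings $\{\varphi_\theta(s,a,\mathscr{L}_i)\}_{(s,a)\in\mathcal{S}\times\mathcal{A}}$. We identify $Y_i$ with the table $Q^*(\mathscr{L}_i)$ and use the norm $\|\cdot\|_\infty$ over $\mathcal{S}\times\mathcal{A}$. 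Then $s_i=\|\hat f(X_i)-Y_i\|_\infty$ has exactly the form of the scores in Lemma~\ref{CP}.

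The key step is to transfer exchangeability from the embeddings to the scores. By \eqref{eq_def_Qtheta}, $Q_\theta(s,a;\mathscr{L}_i)=\psi_\theta(\varphi_\theta(s,a,\mathscr{L}_i))$, so the predicted table is a fixed measurable function of the embedding collection. The optimal table $Q^*(\mathscr{L}_i)$ is determined by the task, as the fixed point of the robust Bellman operator $\mathcal{T}^\star$ of Sec.~\ref{RMDP} for the MDP that $\mathscr{L}_i$ specifies. We therefore interpret the hypothesis as saying that the embeddings carry the task information, so that $Q^*(\mathscr{L}_i)=g(\{\varphi_\theta(s,a,\mathscr{L}_i)\}_{s,a})$ for some fixed map $g$. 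Equivalently, we can read the hypothesis as exchangeability of the pairs $(X_i,Y_i)$. In either reading, $s_i=h(X_i)$ with $h$ fixed and the same for every index. Applying a common fixed function to each element of an exchangeable sequence preserves exchangeability, because permuting indices commutes with componentwise application of $h$. Hence $\{s_i\}_{i\in\{1,\dots,N\}\cup\{i_{new}\}}$ is exchangeable. The supremum over the finite set $\mathcal{S}\times\mathcal{A}$ is a finite maximum, so $h$ is measurable.

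With exchangeability of the scores in hand, Lemma~\ref{CP} gives $\mathbb{P}(s_{i_{new}}\le q_{1-\delta_0})\ge 1-\delta_0$, with $q_{1-\delta_0}$ the $\lceil(1-\delta_0)(N+1)\rceil$-th smallest element of $\{s_i\}\cup\{\infty\}$. This is precisely \eqref{eq_transformer_cp}. For completeness we can recall the underlying rank argument. By exchangeability, the rank of $s_{i_{new}}$ among the $N+1$ scores is stochastically dominated by a uniform variable on $\{1,\dots,N+1\}$, with ties broken conservatively. So the probability that it exceeds the $\lceil(1-\delta_0)(N+1)\rceil$-th order statistic is at most $\delta_0$.

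The main obstacle is the second step. The hypothesis constrains only the embeddings, whereas the score also involves the ground truth $Q^*(\mathscr{L}_i)$. We need to justify that $Q^*$ is a fixed function of the same exchangeable object, or else strengthen the reading of the hypothesis to joint exchangeability of embeddings and task-induced targets. We will state this identification explicitly. Everything else is routine: measurability on a finite set and invariance of exchangeability under fixed maps.
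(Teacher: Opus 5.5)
Your proposal is correct and follows the same route as the paper, whose proof consists solely of invoking Lemma~\ref{CP}. You also make explicit the step the paper leaves implicit: the scores $s_i$ are exchangeable only if $Q^*(\mathscr{L}_i)$ is a fixed function of the same exchangeable embeddings, or if embeddings and targets are jointly exchangeable, which is the reading the paper tacitly relies on.
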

\begin{proof}
This follows from Lemma~\ref{CP}.
\end{proof}
\vspace{-2mm}

\subsection{Transformer-Accelerated Robust Q-learning}
\vspace{-3mm}
Although Lemma~\ref{lemma_transformer_cp} provides a theoretical error bound on the action-value function prediction, the quantile $q_{1-\delta}$ in~\eqref{eq_transformer_cp} is almost always nonzero due to the empirical nature of LoRA fine-tuning. We first derive an explicit dependence of the convergence behavior of robust Q-learning on the initialization induced by the transformer prediction $Q_{\theta}$, showing accelerated convergence to $Q^*$.

\begin{theorem} [Robust Q-Learning Warm-Started by $Q_{\theta}$]
\label{theorem_warmstart}
Suppose we are given a new task specification $\mathscr{L}_{i_{new}}$ with all the assumptions in Lemma~\ref{lemma_transformer_cp} satisfied. If we initialize the action-value function as $Q_0(s,a;\mathscr{L}_{i_{new}}) = Q_{\theta}(s,a;\mathscr{L}_{i_{new}})$, we have the following bound for all $t\leq T$ and some $\delta_0 \in (0,1)$ at least with probability $1-6\phi-\delta_0$: \begin{equation}
\begin{aligned}
\label{eq_speedup}
\resizebox{\linewidth}{!}{$\displaystyle
    \Delta Q_\infty(\mathscr{L}_{i_{new}})
    \leq
    \underbrace{\tfrac{(1-\rho)^{k}q_{1-\delta_0}}{1-\gamma}}_{\text{contracting term}}
    +
    \underbrace{\tfrac{5\hat c\gamma}{1-\gamma}
    \sqrt{\alpha\log\left(\tfrac{|\mathcal{S}||\mathcal{A}|T}{\phi}\right)}
    +\epsilon}_{\text{steady-state error term}}
$}
\end{aligned}
\end{equation}
where $\Delta Q_\infty(\mathscr{L}_{i_{new}}) \coloneqq \|Q_t(\mathscr{L}_{i_{new}}) - Q^*(\mathscr{L}_{i_{new}})\|_{\infty}$ with $Q_0(s,a;\mathscr{L}_{i_{new}}) = Q_{\theta}(s,a;\mathscr{L}_{i_{new}})$, and the other notations are consistent with the ones in Lemmas~\ref{th:Q_converge} and~\ref{lemma_transformer_cp}. Also, if we select the learning rate $\alpha$ to be
$$\alpha = \tfrac{c_1}{\log\left(\tfrac{T|\mathcal{S}||\mathcal{A}|}{\phi}\right)} \min\left(\tfrac{1}{t_{\mathrm{mix}}},\tfrac{\epsilon^2(1-\gamma)^4}{\gamma^2}\right)$$
then we have $\Delta Q_\infty(\mathscr{L}_{i_{new}}) \leq 3\epsilon$ whenever $k \geq \tfrac{\log\left(\tfrac{q_{1-\delta_0}}{(1-\gamma)\epsilon}\right)}{\log\left(\tfrac{1}{1-\rho}\right)}$ for a suitable constant $c_1$.
\end{theorem}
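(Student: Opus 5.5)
The plan is to combine Lemma~\ref{th:Q_converge} and Lemma~\ref{lemma_transformer_cp} through a union bound, and then do a short calculation for the learning-rate corollary.

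\textbf{Step 1: the two events.} Fix the new specification $\mathscr{L}_{i_{new}}$ and run robust Q-learning~\eqref{eq:Q_update_Rcont} from $Q_0=Q_\theta(\cdot;\mathscr{L}_{i_{new}})$.

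Define the event $E_1$ on which the bound of Lemma~\ref{th:Q_converge} holds for all $t\le T$. Lemma~\ref{th:Q_converge} holds for an arbitrary deterministic initialization. Because $\theta$ is fixed after training, $Q_0$ depends only on $\mathscr{L}_{i_{new}}$. I will therefore apply the lemma conditionally on $\mathscr{L}_{i_{new}}$: the sampling randomness of the trajectories is independent of the prompt given $Q_0$. This gives $\mathbb{P}(E_1^c\mid \mathscr{L}_{i_{new}})\le 6\phi$, and hence $\mathbb{P}(E_1^c)\le 6\phi$ after taking expectations. This step needs care, since the constants $\hat c,c_0$ in Lemma~\ref{th:Q_converge} must not depend on $Q_0$. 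I will argue that they depend only on $\gamma,\mu_{\min},t_{\mathrm{mix}}$ and the reward range, with $Q_0$ entering only through $\Delta Q_0$. If the lemma requires $Q_0$ to be bounded by $1/(1-\gamma)$, I will clip $Q_\theta$ to $[0,1/(1-\gamma)]$. Clipping cannot increase $\|Q_0-Q^\star\|_\infty$, because $Q^\star$ lies in that range.

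Define the event $E_2$ from Lemma~\ref{lemma_transformer_cp} as $\Delta Q_0=\|Q_\theta(\mathscr{L}_{i_{new}})-Q^\star(\mathscr{L}_{i_{new}})\|_\infty\le q_{1-\delta_0}$. Its probability is at least $1-\delta_0$.

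\textbf{Step 2: the main bound.} By the union bound, $\mathbb{P}(E_1\cap E_2)\ge 1-6\phi-\delta_0$. On $E_1\cap E_2$, substitute $\Delta Q_0\le q_{1-\delta_0}$ into the contracting term. The map $\Delta Q_0\mapsto (1-\rho)^k\Delta Q_0/(1-\gamma)$ is nondecreasing because $\rho\in(0,1)$, so this substitution yields~\eqref{eq_speedup} for every $t\le T$.

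\textbf{Step 3: the corollary.} With the prescribed $\alpha$, first check the admissibility condition $\alpha\log(|\mathcal{S}||\mathcal{A}|T/\phi)<1$. This holds once $c_1<1$, since the minimum is at most $1/t_{\mathrm{mix}}\le1$. Next bound the steady-state term:
\[
\tfrac{5\hat c\gamma}{1-\gamma}\sqrt{\alpha\log(\cdot)}\le \tfrac{5\hat c\gamma}{1-\gamma}\sqrt{c_1}\,\tfrac{\epsilon(1-\gamma)^2}{\gamma}=5\hat c\sqrt{c_1}(1-\gamma)\epsilon\le\epsilon
\]
for $c_1\le (5\hat c)^{-2}$. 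The steady-state error term is therefore at most $2\epsilon$.

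For the contracting term, the condition on $k$ is equivalent to $k\log\frac{1}{1-\rho}\ge\log\frac{q_{1-\delta_0}}{(1-\gamma)\epsilon}$. Exponentiating gives $(1-\rho)^k q_{1-\delta_0}/(1-\gamma)\le\epsilon$. Summing the two pieces gives $\Delta Q_\infty\le 3\epsilon$. If $q_{1-\delta_0}<(1-\gamma)\epsilon$, the condition holds trivially with $k=0$.

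\textbf{Main obstacle.} The algebra is routine. The delicate point is Step 1: justifying that Lemma~\ref{th:Q_converge} applies to a data-dependent initialization with a probability that is uniform over $Q_0$, so that the union bound is legitimate. I expect to resolve it by the conditioning argument above, using independence of the environment sampling from $\mathscr{L}_{i_{new}}$ given $Q_0$.
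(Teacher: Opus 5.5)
Your proposal is correct and follows the same route as the paper: a union bound combining the $1-6\phi$ event of Lemma~\ref{th:Q_converge} with the $1-\delta_0$ conformal event of Lemma~\ref{lemma_transformer_cp}, substitution of $\Delta Q_0\le q_{1-\delta_0}$ into the contracting term, then the condition on $k$ to bound that term by $\epsilon$ and a choice of $c_1$ (as in \cite{NEURIPS2020_4eab60e5}) that makes the steady-state term at most $2\epsilon$. Your explicit choice $c_1\le\min\{1,(5\hat c)^{-2}\}$ and your conditioning and clipping remarks make precise details that the paper's two-line proof leaves implicit.
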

\begin{proof}
The relation~\eqref{eq_speedup} follows from Lemmas~\ref{th:Q_converge} and~\ref{lemma_transformer_cp}. The condition on $k$ is by bounding the first term of~\eqref{eq_speedup} by $\epsilon$. The constant $c_1$ can be found as in~\cite{NEURIPS2020_4eab60e5}.
\end{proof}

While Theorem~\ref{theorem_warmstart} is useful from an analytical standpoint, it is of limited practical value because the steady-state error term can dominate the upper bound of~\eqref{eq_speedup}, making it overly conservative. Empirically, however, as shown in Sec.~\ref{sec_numerical}, the actual convergence of robust Q-learning is substantially faster.
We derive the following tighter bound to bridge this gap at the expense of a reduced probability guarantee.

\begin{theorem}[Statistical Convergence]
\label{thm:time_uniform_conformal_q}
Suppose that we are given a collection of task specifications $\{\mathscr{L}_{i}\}_{i=1}^N$ with exchangeable embeddings as in Lemma~\ref{lemma_transformer_cp}. Suppose we are given a new task specification $\mathscr{L}_{i_\mathrm{new}}$ with all the assumptions in Lemma~\ref{lemma_transformer_cp} satisfied. For each~\( i \), let \( \{Q_t(s,a;\mathscr{L}_{i})\}_{t=0}^T \) denote the sequence of Q-functions generated by the robust Q-learning initialized as $Q_0(s,a;\mathscr{L}_{i}) =  Q_\theta(s,a;\mathscr{L}_{i})$ with the transformer prediction $Q_\theta$ defined in~\eqref{eq_def_Qtheta}. Also, let \( Q^*(s,a;\mathscr{L}_{i}) \) denote the corresponding optimal action-value function.
Inspired by the result~\eqref{eq_speedup} of Theorem~\ref{theorem_warmstart}, we define the following step-wise nonconformity scores:
\begin{align}
\label{eq_def_nonconformity}
s_{t,i}
= \max\left\{0,
\Delta Q_t(\mathscr{L}_{i}) -\tfrac{\Delta Q_0(\mathscr{L}_{i}) (1-\rho)^k}{1-\gamma}
\right\}
\end{align}
where $\Delta Q_t(\mathscr{L}_{i}) \coloneqq \|Q_t(\mathscr{L}_{i}) - Q^*(\mathscr{L}_{i})\|_\infty$ and $\Delta Q_0(\mathscr{L}_{i}) \coloneqq \|Q_\theta(\mathscr{L}_{i}) - Q^*(\mathscr{L}_{i})\|_\infty$.
Then, for a task specification \( \mathscr{L}_{i_\mathrm{new}} \) whose embedding is exchangeable with those of $\{\mathscr{L}_{i}\}_{i=1}^N$, the following holds:
\vspace{-2mm}
\begin{equation}
\label{eq:probability_bound}
\resizebox{\linewidth}{!}{$\displaystyle
\mathbb{P}\Big(
\forall t \in I,~\Delta Q_t(\mathscr{L}_{i_\mathrm{new}})
\le
\tfrac{\Delta Q_0(\mathscr{L}_{i_\mathrm{new}}) (1-\rho)^k}{1-\gamma}
+
q_{t,1-\delta_t}
\Big)
\geq
1 - \sum_{t=1}^T \delta_t
$}
\end{equation}
where $I \coloneqq \{1,\cdots,T\}$ and $q_{t,1-\delta_t}$ is the $(1-\delta_t)$-th empirical quantile of the $t$-th nonconformity scores $\{s_{t,i}\}_{i=1}^N$. Also, for any $\epsilon \in (0,\infty)$, we have $\Delta Q_t(\mathscr{L}_{i_\mathrm{new}}) \leq \epsilon + q_{t,1-\delta_t}$ at least with probability $1 - \sum_{t=1}^T \delta_t -\delta_0$ for the defined $k$, where $\delta_0$ and $q_{1-\delta}$ are again given in~\eqref{eq_transformer_cp} of Lemma~\ref{lemma_transformer_cp}.
\end{theorem}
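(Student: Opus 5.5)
The plan is to apply Lemma~\ref{CP} once per iteration $t\in I$ and then combine the $T$ per-step events with a union bound. Fix $t$. The score $s_{t,i}$ in~\eqref{eq_def_nonconformity} is a deterministic, measurable function of the task specification $\mathscr{L}_i$, and it depends on $\mathscr{L}_i$ only through its embedding. This holds because $\theta$ is frozen, so $Q_0=Q_\theta$ is determined by the embedding. The robust Q-learning run and $Q^*(\mathscr{L}_i)$ are then determined by the task, where I treat the sampling randomness of the run as drawn i.i.d.\ across tasks. Finally, $k$ and $\rho$ are deterministic functions of $t$ and the common hyperparameters. It follows that exchangeability of $\{\varphi_\theta(s,a,\mathscr{L}_i)\}_{i\in\{1,\dots,N\}\cup\{i_{new}\}}$ transfers to exchangeability of $\{s_{t,i}\}_{i\in\{1,\dots,N\}\cup\{i_{new}\}}$. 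Lemma~\ref{CP} then gives $\mathbb{P}(s_{t,i_{new}}\le q_{t,1-\delta_t})\ge 1-\delta_t$, where $q_{t,1-\delta_t}$ is taken as the conformal, finite-sample-corrected quantile.

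Next I unpack the event $s_{t,i_{new}}\le q_{t,1-\delta_t}$. Since $q_{t,1-\delta_t}\ge 0$ and $\max\{0,x\}\le q$ implies $x\le q$, this event is contained in $\{\Delta Q_t(\mathscr{L}_{i_{new}})\le \Delta Q_0(\mathscr{L}_{i_{new}})(1-\rho)^k/(1-\gamma)+q_{t,1-\delta_t}\}$. Call the latter event $E_t$. The union bound gives $\mathbb{P}(\bigcap_{t\in I}E_t)\ge 1-\sum_{t=1}^T\mathbb{P}(E_t^c)\ge 1-\sum_{t=1}^T\delta_t$, which is~\eqref{eq:probability_bound}. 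No independence across $t$ is needed, which matters because the scores along a single trajectory are strongly dependent.

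For the second claim, I intersect $\bigcap_t E_t$ with the event of Lemma~\ref{lemma_transformer_cp}, namely $\Delta Q_0(\mathscr{L}_{i_{new}})\le q_{1-\delta_0}$. Another union bound shows that the intersection has probability at least $1-\sum_t\delta_t-\delta_0$. On it, $\Delta Q_t\le q_{1-\delta_0}(1-\rho)^k/(1-\gamma)+q_{t,1-\delta_t}$. As in the proof of Theorem~\ref{theorem_warmstart}, the first term is at most $\epsilon$ once $k\ge \log\!\big(q_{1-\delta_0}/((1-\gamma)\epsilon)\big)/\log\!\big(1/(1-\rho)\big)$. I read ``for the defined $k$'' as meaning $k$ in this range. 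This gives $\Delta Q_t\le\epsilon+q_{t,1-\delta_t}$.

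The main obstacle is the first step: justifying that exchangeability of embeddings implies exchangeability of the trajectory-level scores. The scores also depend on the stochastic Q-learning sample paths, and not only on $\mathscr{L}_i$. I would handle this by conditioning. Each task's run uses independent randomness, with the same law given the task, so the pairs (embedding, run noise) remain exchangeable and any fixed function of them is exchangeable. A secondary subtlety is using the conformal index $\lceil(1-\delta_t)(N+1)\rceil$ rather than the plain empirical quantile, so that Lemma~\ref{CP} applies verbatim.
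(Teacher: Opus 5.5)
Your proposal is correct and takes the same route as the paper. Both apply Lemma~\ref{CP} to the scores $s_{t,i}$ at each fixed $t$, then use Boole's inequality over $t\in I$. For the second claim, both intersect with the event $\Delta Q_0(\mathscr{L}_{i_\mathrm{new}})\le q_{1-\delta_0}$ from Lemma~\ref{lemma_transformer_cp} and bound the contracting term by $\epsilon$, as in Theorem~\ref{theorem_warmstart}. You also spell out two points the paper's proof leaves implicit: why exchangeability carries over to the scores (independent run randomness), and the finite-sample index $\lceil(1-\delta_t)(N+1)\rceil$.
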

\begin{proof}
For each fixed \( t \), Lemma~\ref{CP} implies
$
\mathbb{P}\left(
s_{t,i_\mathrm{new}} \le q_{t,1-\delta_t}
\right)
\ge 1 - \delta_t
$
where $s_{t,i_\mathrm{new}}$ is the new nonconformity score defined for the new sample $\mathscr{L}_{i_\mathrm{new}}$. For the failure event given as $E_t \coloneqq \left\{ s_{t,i_\mathrm{new}} > q_{t,1-\delta_t} \right\}$ so that \( \mathbb{P}(E_t) \le \delta_t \) yields
$
\mathbb{P}\left( \bigcup_{t=1}^T E_t \right) \le \sum_{t=1}^T \delta_t
$
by Boole's inequality. Taking the probability of the complement of $\bigcup_{t=1}^T E_t$ gives~\eqref{eq:probability_bound}. The rest follows from Theorem~\ref{theorem_warmstart}.
\end{proof}

Theorem~\ref{thm:time_uniform_conformal_q} provides a time-uniform, distribution-free and high-probability bound on the convergence rate of pre-trained model-predicted Q-functions. This is in contrast with the standard classical approach to Q-learning, where asymptotic and expectation-based bounds are prescribed. In addition, the theorem provides a finite-sample, non-asymptotic bound which holds for every iteration during the learning process by defining how the $t-$th step error reduces under a warm start. The uncertainty in the process is represented by data-driven quantiles. This yields a method to certify that the learning dynamics of a transformer-predicted warm start Q-function are reliable and hence, useful in unknown, safety-critical and data-sparse environments. A critical limitation in practically applying the bounds~\eqref{eq_speedup} and~\eqref{eq:probability_bound} is the sim-to-real gap. Given a task specification~$\mathcal{L}$, let $Q_t(s,a;\mathcal{L},\mathcal{E}_{\mathrm{sim}})$ denote the action value function constructed in a simulated environment $\mathscr{E}_{\mathrm{sim}}$ and $Q^*(s,a;\mathcal{L},\mathcal{E}_{\mathrm{real}})$ be the corresponding optimal action-value function in the real world environment $\mathcal{E}_{\mathrm{real}}$. At a high level, the discrepancy $\Delta Q_t(\mathcal{L},\mathcal{E}_{\mathrm{sim}},\mathcal{E}_{\mathrm{real}}) \coloneqq \|Q_t(\mathcal{L},\mathcal{E}_{\mathrm{sim}})-Q^*(\mathcal{L},\mathcal{E}_{\mathrm{real}})\|_{\infty}$ can be computed as follows~\cite{Robust_RL_Rcont,NEURIPS2020_4eab60e5}:
\begin{align}
&\Delta Q_t(\mathcal{L},\mathcal{E}_{\mathrm{sim}},\mathcal{E}_{\mathrm{real}})
\leq \Delta P_t(\mathcal{L},\mathcal{E}_{\mathrm{sim}})
+\Delta E_t(\mathcal{L},\mathscr{E}_{\mathrm{sim}},\mathscr{E}_{\mathrm{real}})\nonumber\\
&+(\|I-\Lambda_t\|_\infty+\gamma\|\Lambda_t\|_\infty) \Delta Q_{t-1}(\mathcal{L},\mathcal{E}_{\mathrm{sim}},\mathcal{E}_{\mathrm{real}})
\label{eq:psi_decomp}
\end{align}
where $\Lambda_t \in \mathbb{R}^{|\mathcal{S}||\mathcal{A}| \times |\mathcal{S}||\mathcal{A}|}$ is a diagonal matrix encoding the step size $\alpha_t$ at the visited state--action pair $(s_t, a_t)$ and zero elsewhere, $P \in \mathbb{R}^{|\mathcal{S}||\mathcal{A}| \times |\mathcal{S}|}$ denotes the
sample transition matrix from the single observed transition at time $t$, $\Delta P_t(\mathcal{L},\mathcal{E}_{\mathrm{sim}})$ denotes the bounded error arising from the discrepancy between the MLE estimated transition model and the true simulated model at $t$, and $\Delta E_t(\mathcal{L},\mathscr{E}_{\mathrm{sim}}, \mathscr{E}_{\mathrm{real}})$ captures the cumulative discrepancy between the simulated and real environments. Inspired by recent work on closed-loop statistical inference~\cite{hsu2025statistical}, we finally analyze the robustness of the warm-started robust Q-learning algorithm by applying the idea analogous to Theorem~\ref{thm:time_uniform_conformal_q} to the model~\eqref{eq:psi_decomp}.
\begin{theorem}[Robustness of Robust Q-Learning]
Suppose that we are given a collection of task specifications and environments, $\{\mathscr{L}_{i}\}_{i=1}^N$, $\{\mathscr{E}_{i,\mathrm{sim}}\}_{i=1}^N$,  and $\{\mathscr{E}_{i,\mathrm{real}}\}_{i=1}^N$, with exchangeable embeddings as in Lemma~\ref{lemma_transformer_cp}. Suppose we are given a new task specification and environments, $\mathscr{L}_{i_\mathrm{new}}$, $\mathscr{E}_{i_\mathrm{new},\mathrm{sim}}$, and $\mathscr{E}_{i_\mathrm{new},\mathrm{real}}$, with all the assumptions in Lemma~\ref{lemma_transformer_cp} satisfied.
Define the following trajectory-wise nonconformity scores:
$
\label{eq_def_nonconformity_traj}
s_{i,\mathrm{traj}}
= \sup_{t\in[t_0,T]}\Delta P_t(\mathcal{L}_{i},\mathcal{E}_{i,\mathrm{sim}})+\Delta E_t(\mathcal{L}_{i},\mathscr{E}_{i,\mathrm{sim}},\mathscr{E}_{i,\mathrm{real}})~~
$
for $\Delta P_t$ and $\Delta E_t$ given in~\eqref{eq:psi_decomp}, where $t_0$ denotes the time when the relation~\eqref{eq:psi_decomp} becomes valid with $\|I-\Lambda_t\|_\infty < 1$~\cite{Robust_RL_Rcont,NEURIPS2020_4eab60e5}. For each~\( i \), let \( \{Q_t(s,a;\mathscr{L}_{i},\mathscr{E}_{i,\mathrm{sim}})\}_{t=0}^T \) denote the sequence of Q-functions generated by the robust Q-learning initialized as $Q_{t_0}(s,a;\mathscr{L}_{i}) = Q_\theta(s,a;\mathscr{L}_{i},\mathscr{E}_{i,\mathrm{sim}})$ with the transformer prediction $Q_\theta$ defined in~\eqref{eq_def_Qtheta}. Also, let \( Q^*(s,a;\mathscr{L}_{i},\mathscr{E}_{i,\mathrm{real}}) \) denote the corresponding optimal action-value function. Then, for a new task specification and environments, $\mathscr{L}_{i_\mathrm{new}}$, $\mathscr{E}_{i_\mathrm{new},\mathrm{sim}}$, and $\mathscr{E}_{i_\mathrm{new},\mathrm{real}}$, whose embeddings are exchangeable with their respective calibration samples, the following holds:
\begin{align}
&\mathbb{P}(
\forall t \in [t_0,T],~\Delta Q_t(\mathscr{L}_{i_\mathrm{new}},\mathscr{E}_{i_\mathrm{new},\mathrm{sim}},\mathscr{E}_{i_\mathrm{new},\mathrm{real}}) \\
&\leq \Delta Q_{t_0}        (\mathscr{L}_{i_\mathrm{new}},\mathscr{E}_{i_\mathrm{new},\mathrm{sim}},\mathscr{E}_{i_\mathrm{new},\mathrm{real}}) (1-\lambda)^{t-t_0}\\
&+\tfrac{q_{1-\delta_{\mathrm{traj}}}}{\lambda}(1-(1-\lambda)^{t-t_0})) \geq 1-\delta_{\mathrm{traj}}
\end{align}
where $\lambda \coloneqq \inf_{t\in[t_0,T]}(1-\|I - \Lambda_t\|_{\infty}-\gamma\|\Lambda_t\|_{\infty})$, and $q_{1-\delta_{\mathrm{traj}}}$ is the $(1-\delta_{\mathrm{traj}})$-th empirical quantile of the trajectory-wise nonconformity scores $\{s_{i,\mathrm{traj}}\}_{i=1}^N$. Furthermore, for any $\epsilon \in (0,\infty)$, we have $$\Delta Q_t(\mathscr{L}_{i_\mathrm{new}},\mathscr{E}_{i_\mathrm{new},\mathrm{sim}},\mathscr{E}_{i_\mathrm{new},\mathrm{real}}) \leq \epsilon + \tfrac{q_{1-\delta_{\mathrm{traj}}}(1-(1-\lambda)^{t-t_0})}{\lambda}$$ at least with probability $1 - \delta_{\mathrm{traj}} -\delta_0$ whenever $\lambda \in (0,1)$ and $t$ satisfies the following:
$t \geq \log(q_{1-\delta_0}/\epsilon)\big/[-\log(1-\lambda)]$
where $\delta_0$ and $q_{1-\delta_0}$ are again given in~\eqref{eq_transformer_cp} of Lemma~\ref{lemma_transformer_cp}.
\end{theorem}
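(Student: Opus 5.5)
The plan is to combine a single conformal event on the trajectory-wise score with a deterministic discrete Grönwall-type unrolling of the recursion~\eqref{eq:psi_decomp}, and then to use Lemma~\ref{lemma_transformer_cp} for the initial error. The key observation is that $s_{i,\mathrm{traj}}$ is a supremum over the whole window $[t_0,T]$. A single application of Lemma~\ref{CP} therefore controls every time step at once, with no union bound over $t$. This is why the failure probability is $\delta_{\mathrm{traj}}$ rather than $\sum_t\delta_t$ as in Theorem~\ref{thm:time_uniform_conformal_q}.

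\textbf{Step 1 (conformal event).} By the assumed exchangeability of the embeddings of $(\mathscr{L}_i,\mathscr{E}_{i,\mathrm{sim}},\mathscr{E}_{i,\mathrm{real}})$, the scores $\{s_{i,\mathrm{traj}}\}_{i=1}^N\cup\{s_{i_\mathrm{new},\mathrm{traj}}\}$ are exchangeable. They are deterministic functionals of the task, the environments and the learning run; if the run uses internal randomness, we take it i.i.d.\ across $i$. Lemma~\ref{CP} then gives the event
\[
G \coloneqq \{ s_{i_\mathrm{new},\mathrm{traj}}\le q_{1-\delta_{\mathrm{traj}}} \}, \qquad \mathbb{P}(G)\ge 1-\delta_{\mathrm{traj}} .
\]
On $G$ we have $\Delta P_t+\Delta E_t\le q\coloneqq q_{1-\delta_{\mathrm{traj}}}$ for every $t\in[t_0,T]$.

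\textbf{Step 2 (unrolling).} Write $e_t\coloneqq\Delta Q_t(\mathscr{L}_{i_\mathrm{new}},\mathscr{E}_{i_\mathrm{new},\mathrm{sim}},\mathscr{E}_{i_\mathrm{new},\mathrm{real}})$. By the definition of $\lambda$, the coefficient in~\eqref{eq:psi_decomp} is at most $1-\lambda$ for every $t\in[t_0,T]$. On $G$ this yields $e_t\le (1-\lambda)e_{t-1}+q$. Induction from $t_0$ then gives
\[
e_t\le (1-\lambda)^{t-t_0}e_{t_0}+q\sum_{j=0}^{t-t_0-1}(1-\lambda)^j = (1-\lambda)^{t-t_0}e_{t_0}+\tfrac{q}{\lambda}\bigl(1-(1-\lambda)^{t-t_0}\bigr).
\]
The geometric sum requires $\lambda\in(0,1]$; the closed form still makes sense as $\lambda\to0$ if interpreted as a limit, but we simply assume $\lambda>0$ as in the second claim. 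Since this holds for every $t$ on the same event $G$, the first probability statement follows.

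\textbf{Step 3 (finite-time statement).} Since $Q_{t_0}=Q_\theta$, Lemma~\ref{lemma_transformer_cp} supplies an event $H$ with $\mathbb{P}(H)\ge1-\delta_0$ on which $e_{t_0}\le q_{1-\delta_0}$. Here we identify the sim-initialized $Q_\theta$ error against the real $Q^*$ with the quantity calibrated in Lemma~\ref{lemma_transformer_cp}. This is a modelling identification that I would state explicitly, since the calibration scores there must be computed against the same target. On $G\cap H$, which has probability at least $1-\delta_{\mathrm{traj}}-\delta_0$ by Boole's inequality, the first term is at most $(1-\lambda)^{t-t_0}q_{1-\delta_0}$. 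This is at most $\epsilon$ once $(t-t_0)\ge \log(q_{1-\delta_0}/\epsilon)/[-\log(1-\lambda)]$. The stated condition is phrased with $t$ in place of $t-t_0$, so I would either read it with $t_0=0$ or note that $t-t_0$ is the intended quantity. When $q_{1-\delta_0}\le\epsilon$, the condition is vacuous.

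\textbf{Main obstacle.} The main obstacle is justifying exchangeability in Step 1. $\lambda$ and the coefficients $\Lambda_t$ are trajectory-dependent, and $\lambda$ must be uniform for the new trajectory. I would handle this by treating $\lambda$ as a known deterministic lower bound on $1-\|I-\Lambda_t\|_\infty-\gamma\|\Lambda_t\|_\infty$ valid for all $t\ge t_0$ (e.g.\ fixed by the step size schedule). Alternatively, $\lambda$ could be folded into the score, but taking it as a deterministic bound keeps Step 2 purely deterministic on $G$.
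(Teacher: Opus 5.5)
Your proposal is correct and follows the paper's proof: unroll~\eqref{eq:psi_decomp} into a geometric sum with ratio $1-\lambda$, bound every forcing term $\Delta P_\tau+\Delta E_\tau$ at once through a single conformal event on the supremum score, and combine with Lemma~\ref{lemma_transformer_cp} via Boole's inequality for the $1-\delta_{\mathrm{traj}}-\delta_0$ claim. Your extra remarks are valid refinements that the paper's short proof leaves implicit: the threshold should involve $t-t_0$, Lemma~\ref{lemma_transformer_cp} must be calibrated against the real-environment $Q^*$, and the run's internal randomness must be i.i.d.\ across $i$. Your worry about $\lambda$, however, is unnecessary, because the score does not involve $\lambda$ and the unrolling in Step~2 holds pathwise with the new run's own $\lambda$.
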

\begin{proof}
Using~\eqref{eq:psi_decomp} iteratively, we get
$
    \Delta Q_t 
    \leq \tilde{\lambda}^{t-t_0}\Delta Q_{t_0}
    + \sum_{\tau=t_0+1}^{t}\tilde{\lambda}^{t-\tau}(\Delta P_\tau + \Delta E_\tau) 
$
where $\tilde{\lambda} \coloneqq 1-\lambda$ and the arguments are omitted for notational simplicity. Using the result of Lemma~\ref{CP}, with probability at least $1-\delta_{\mathrm{traj}}$, we have $\|\Delta P_\tau + \Delta E_\tau\| \leq q_{1-\delta_{\mathrm{traj}}}$ for the new task and the environments, which implies
$
    \Delta Q_t 
    \le\tilde{\lambda}^{t-t_0}\Delta Q_{t_0} 
    +  q_{1-\delta_{\mathrm{traj}}}\cdot\frac{1-\tilde{\lambda}^{t-t_0}}{\lambda}
$
establishing $s_{i,\mathrm{traj}}$ and the bound on $t$.
\end{proof}



\section{Numerical Case Studies}

\label{sec_numerical}
We perform numerical simulations on randomly generated $n\times n$ mazes (Fig.~\ref{fig:maze_examples}) and train BERT\cite{devlin2019bert} on vision--language prompt-based datasets. To facilitate the environment construction, we add a boundary layer around the original mazes. For simulations, we used an NVIDIA L40S GPU with 48GB of GDDR6 memory. Each dataset contained 30,000 sample prompts with responses, and the data generation and training took about 20 hours on average for each maze size.
We use a green square to denote the starting square and the red one to denote the goal (Fig. \ref{fig:maze_examples}). $Q^*$ is now obtained using the hyperparameters given in Table \ref{table:hyper_Q} and the prompt is next constructed in two parts, the \textit{Instruction} and the \textit{Response}. A sample \textit{Instruction} is given in Fig. \ref{fig:prompt}.

\begin{figure}[h]
    \centering
    \includegraphics[width=0.5\textwidth, trim=1.5cm 2.5cm 1cm 1.5cm, clip]{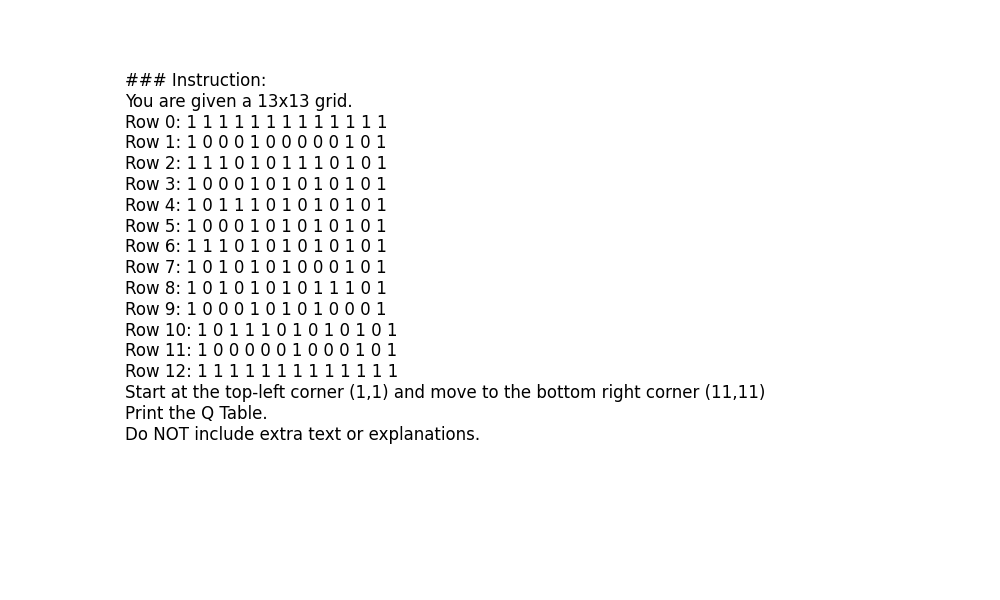}
    \vspace{-10mm}
    \caption{An example \textit{Instruction} for a $13\times 13$ maze}
    \label{fig:prompt}
\end{figure}

\begin{figure}[h]
    \centering

    \begin{subfigure}[b]{0.15\linewidth}
        \centering
        \includegraphics[width=\linewidth, trim=1cm 1cm 1cm 1cm, clip]{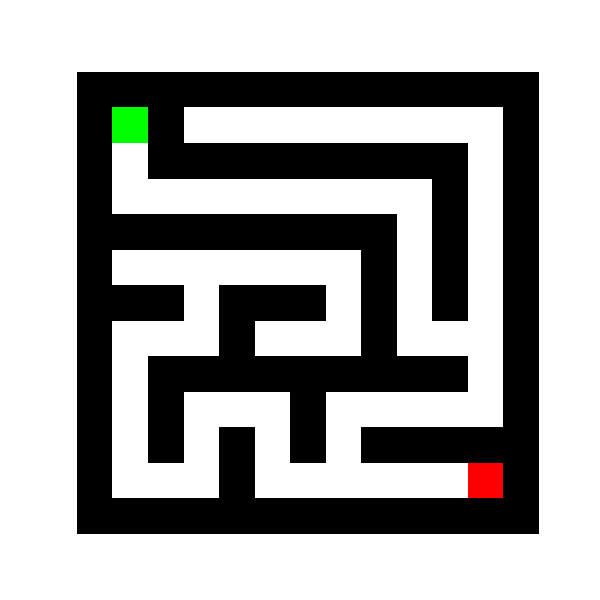}
        \caption*{(a) $t=0$ (start)}
    \end{subfigure}
    \hfill
    \begin{subfigure}[b]{0.15\linewidth}
        \centering
        \includegraphics[width=\linewidth, trim=1cm 1cm 1cm 1cm, clip]{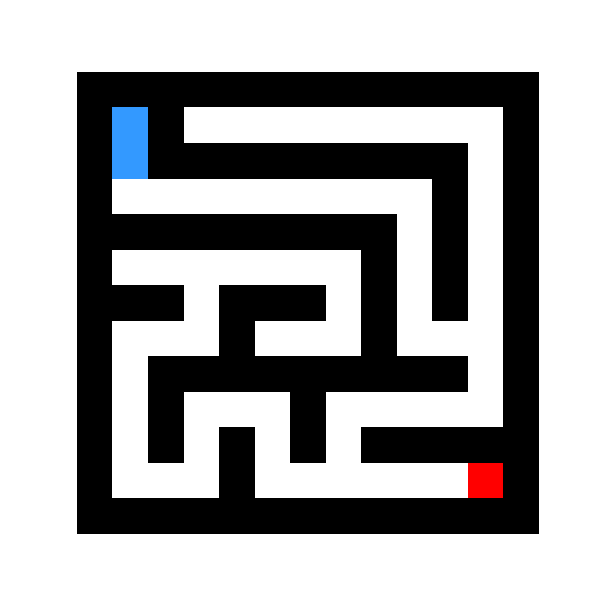}
        \caption*{(b) $t=1$ (w/ BERT)}
    \end{subfigure}
    \hfill
    \begin{subfigure}[b]{0.15\linewidth}
        \centering
        \includegraphics[width=\linewidth, trim=1cm 1cm 1cm 1cm, clip]{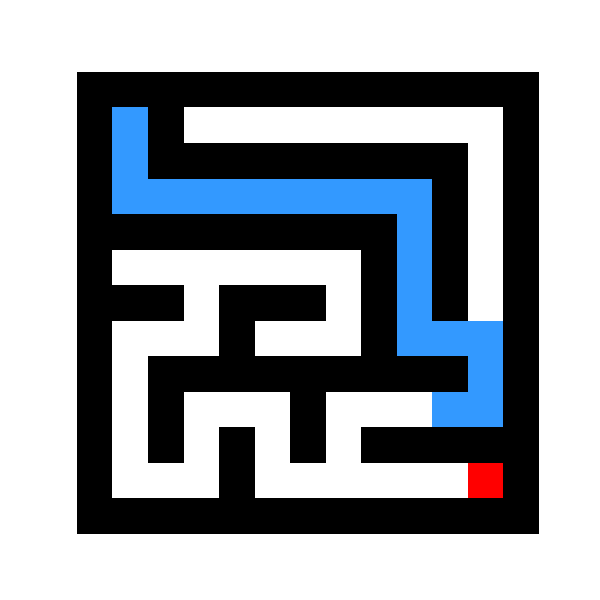}
        \caption*{(c) $t=2$ (w/ BERT)}
    \end{subfigure}
    \hfill
    \begin{subfigure}[b]{0.15\linewidth}
        \centering
        \includegraphics[width=\linewidth, trim=1cm 1cm 1cm 1cm, clip]{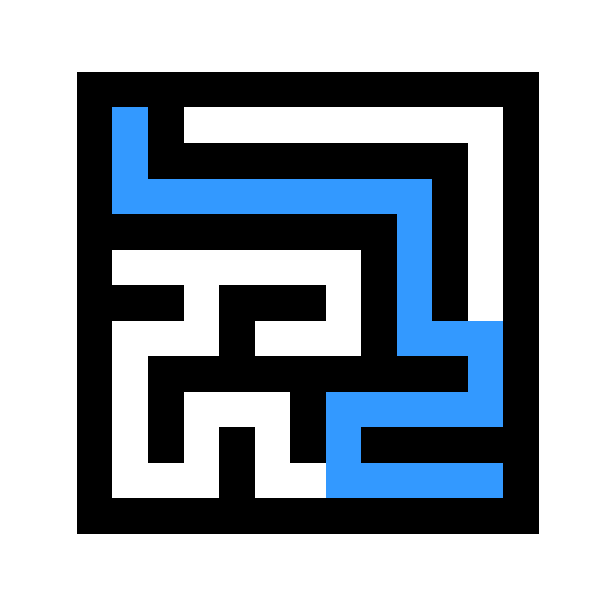}
        \caption*{(d) $t=3$ (w/ BERT)}
    \end{subfigure}
    \hfill
    \begin{subfigure}[b]{0.15\linewidth}
        \centering
        \includegraphics[width=\linewidth, trim=1cm 1cm 1cm 1cm, clip]{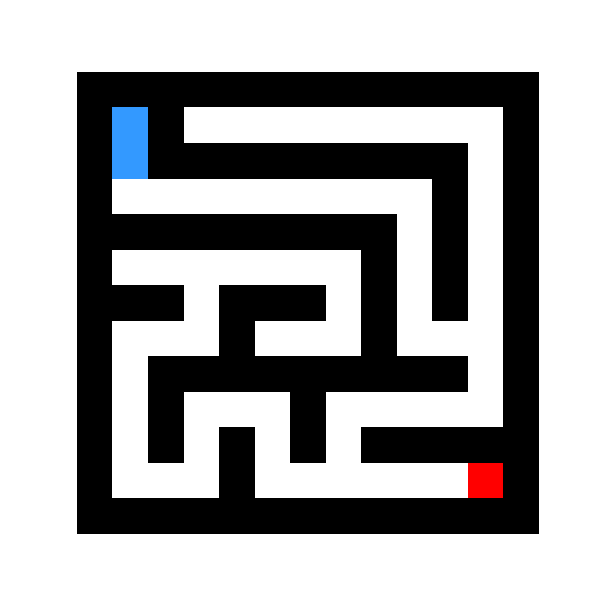}
        \caption*{(e) $t=15$ (w/o BERT)}
    \end{subfigure}
    \vspace{-1.5mm}
    \caption{
    Maze-solving progress for a $13\times13$ maze with $R=0.5$: (a) initial maze; (b)--(d) BERT-based warm start after 1--3 episodes; (e) standard Q-learning after 15 episodes.
    }
    \label{fig:maze_examples}
    \vspace{-5mm}
\end{figure}
The \textit{Response} corresponds to $string(Q^*)$. We then construct a dataset of these \textit{Instruction-Response} pairs and train the model on it, keeping the $R-$Contamination fixed at $\{0.2,0.4,0.5\}$ for $9\times9, 11\times11$ and $13\times13$ maze sizes, respectively, to approximate the optimal action-value function. For this procedure, we sample 2000 random mazes and compute their corresponding error trajectories. During evaluation, $R$ is sampled randomly from $[0.1,0.9]$ to assess the robustness of our framework under varying uncertainty. The hyperparameters and the corresponding reward policy for Q-learning are given in Table \ref{table:hyper_Q}.

\begin{remark}
The rewards in Table~\ref{table:hyper_Q} satisfy the $r\in[0,1]$ assumption in Section~\ref{RMDP} after an affine normalization, $\tilde r=(r-r_{\min})/(r_{\max}-r_{\min})$, which preserves Q-value ordering and the optimal policy.
\vspace{-2mm}
\end{remark}

\subsection{Accelerated Convergence under Warm-start Initialization}
The trained model generates an initial estimate of $Q^*$, denoted by $Q_\theta=Q_{\text{BERT}}$ during evaluation, and we initialize the learning process with this estimate, i.e.,
$
Q_0(s,a) = Q_{\text{BERT}}(s,a),
$
and compare it against the standard initialization:
$
Q_0(s,a) = 0, \quad \forall (s,a) \in S \times A.
$ For both initialization schemes, we track the error with respect to the optimal Q-function $Q^*$ over each iteration
$
\|Q_t - Q^*\|_\infty,
$ distinguishing between $\|Q_t - Q^*\|_{\text{BERT}}$ and $\|Q_t - Q^*\|_{\text{true}}$, the error trajectories initialized with $Q_{\text{BERT}}$ and the zero function, respectively. Fig. \ref{fig:convergence} shows the effect of learned initialization on the convergence behavior of $||Q_t - Q^*||_\infty$, where the blue curves represent the true error under the BERT-based warm start and the red curves correspond to the standard Q-Learning algorithm. Under the influence of the BERT-based warm start, the true error rapidly converges to zero under varying levels of uncertainty, illustrating the impact of learned initialization on convergence behavior.

\vspace{-2mm}
\begin{figure}[htbp]
    \centering

    \begin{subfigure}[b]{0.31\linewidth}
        \centering
        \includegraphics[width=\linewidth, trim=0cm 0cm 1cm 1cm, clip]{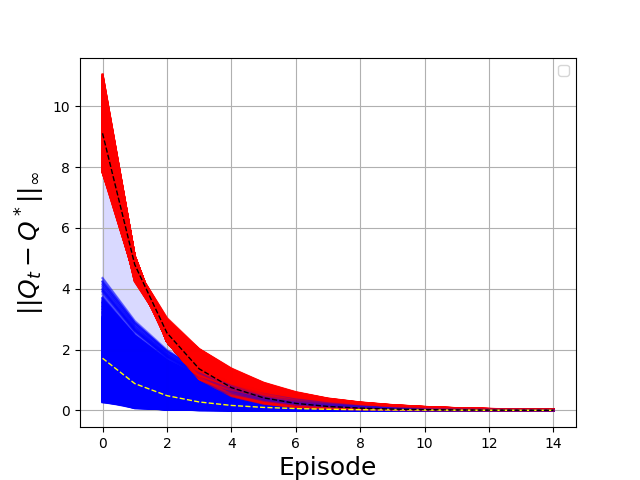}
        \caption{}
        \label{fig:step_b1}
    \end{subfigure}
    \hfill
    \begin{subfigure}[b]{0.31\linewidth}
        \centering
        \includegraphics[width=\linewidth, trim=0cm 0cm 1cm 1cm, clip]{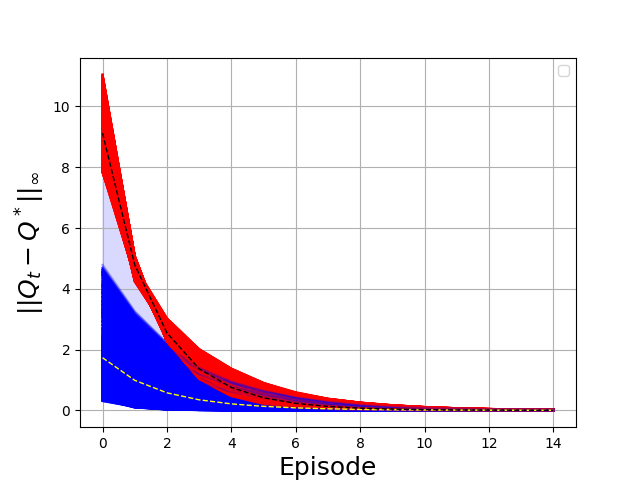}
        \caption{}
        \label{fig:step_c1}
    \end{subfigure}
    \hfill
    \begin{subfigure}[b]{0.31\linewidth}
        \centering
        \includegraphics[width=\linewidth, trim=1cm 1cm 1cm 1cm, clip]{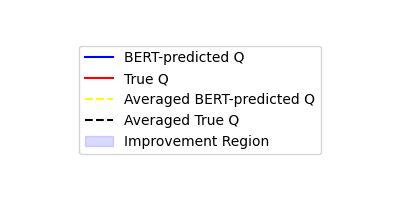}
        \caption*{}
        \label{fig:step_d1}
    \end{subfigure}
     \vspace{-2.5mm}
    \caption{Improvement in convergence behavior using warm start for $R \in [0,1]$ in mazes of size: (a) 11$\times$11 and (b) 13$\times$13.}
    \label{fig:convergence}
    
\end{figure}
\begin{figure}[htbp]
\vspace{1mm}
    \centering

    \begin{subfigure}[b]{0.31\linewidth}
        \centering
        \includegraphics[width=\linewidth, trim=0.5cm 0cm 1cm 0.5cm, clip]{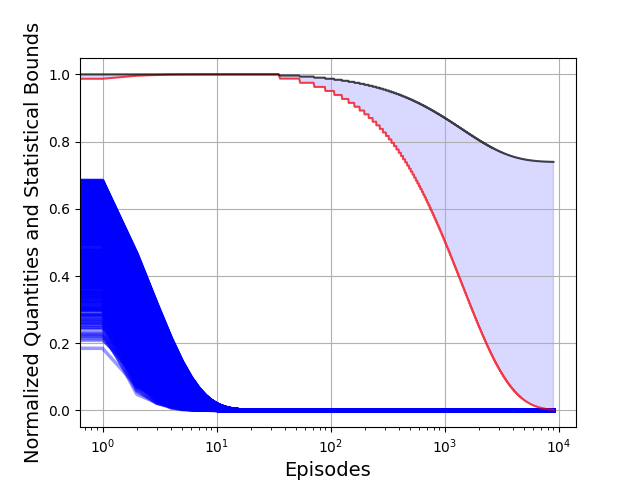}
        \caption{}
        \label{fig:step_a}
    \end{subfigure}
    \hfill
    \begin{subfigure}[b]{0.31\linewidth}
        \centering
        \includegraphics[width=\linewidth, trim=0.5cm 0cm 1cm .5cm, clip]{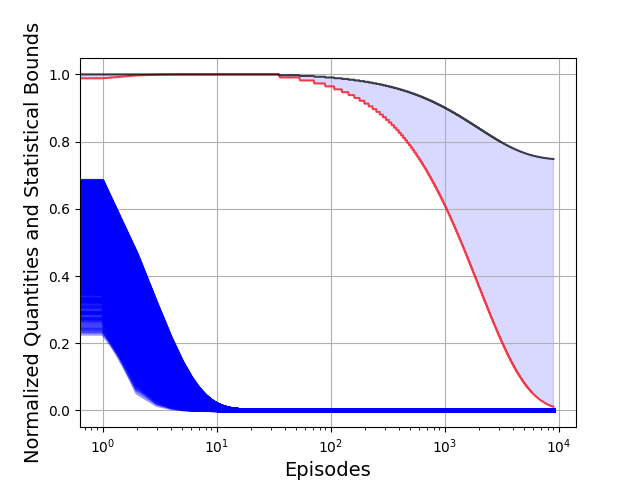}
        \caption{}
        \label{fig:step_c}
    \end{subfigure}
    \hfill
    \begin{subfigure}[b]{0.31\linewidth}
        \centering
        \includegraphics[width=\linewidth, trim=0.5cm 0cm 1cm 1cm, clip]{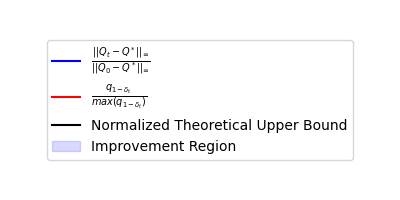}
        \caption*{}
        \label{fig:step_d}
    \end{subfigure}
    \vspace{-2.5mm}
    \caption{Demonstration of the proposed stricter bounds in mazes of size: (a) 11$\times$11 and (b) 13$\times$13.}
    \label{fig:stricter_bound}
    \vspace{-5mm}
\end{figure}

\subsection{Statistical Refinement of Bounds and Conformal Prediction}
Fig. \ref{fig:stricter_bound} shows that the proposed framework from Theorems \ref{theorem_warmstart} and \ref{thm:time_uniform_conformal_q} tightens previous theoretical guarantees in \cite{Robust_RL_Rcont} and the proposed bounds follow the true error. For each maze sample $\{\mathscr{L}_{i}\}_{i=1}^N$, we compute $\{s_{t,i}\}_{t=1}^T$, and the corresponding $(1-\delta_t)$-th empirical quantiles over all episodes. A time-dependent conformal correction is now constructed, which upper bounds the nonconformity scores with high probability, along with the $(1-\delta)$-th quantile for the initial prediction error $||Q_{BERT}-Q^*||$, to calculate the contracting and the steady-state error term for every sample. Episodes are plotted on the logarithmic $x$-axis, the blue curves represent the normalized true error trajectory $\frac{\|Q_t - Q^*\|_\infty}{\max_t \|Q_t - Q^*\|_\infty}$, the red curves represent the normalized conformal correction term $\frac{q_{1-\delta_t}}{\max_t q_{1-\delta_t}}$, and the black curves represent the normalized theoretical upper bound $\frac{(1-\rho)^k q_{1-\delta}+5\hat{c}\gamma\sqrt{\alpha\log(|\mathcal{S}||\mathcal{A}|T/\phi)}+(1-\gamma)\epsilon}{\max_t[(1-\rho)^k q_{1-\delta}+5\hat{c}\gamma\sqrt{\alpha\log(|\mathcal{S}||\mathcal{A}|T/\phi)}+(1-\gamma)\epsilon]}$ respectively. Fig. \ref{fig:histogram} shows a steady monotonic decrease in the logarithmic nonconformity score frequencies which suggests a heavy-tailed nature with the 90\textsuperscript{th}-percentile threshold denoted. Most scores lie to the left of this bound, although a heavy tail suggests the presence of relatively rarer larger deviations, which is accounted for by Thm. \ref{thm:time_uniform_conformal_q}. The threshold shifts across different maze sizes, indicating the framework's response to problem complexity, but the bulk of the distribution remains concentrated to small errors, successfully demonstrating system efficacy.
\begin{figure}[htbp]
\vspace{-3.5mm}
    \centering

    \begin{subfigure}[b]{0.4\linewidth}
        \centering
        \includegraphics[width=\linewidth,
        trim=0.5cm 0cm 0.5cm 1cm, clip]
        {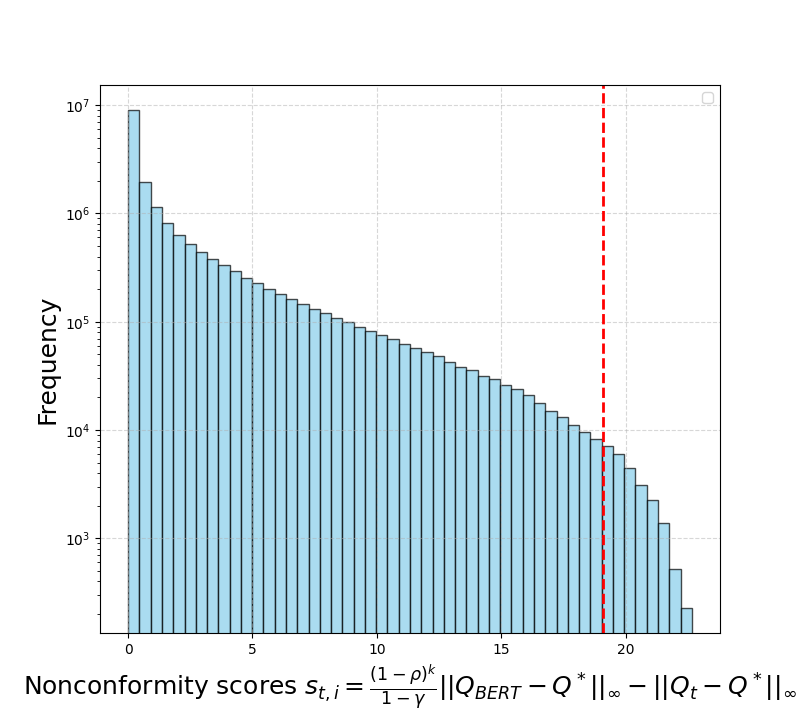}
        \caption{$11\times11$ maze}
        \label{fig:histogram_11x11}
    \end{subfigure}
    \hfill
    \begin{subfigure}[b]{0.4\linewidth}
        \centering
        \includegraphics[width=\linewidth,
        trim=0.5cm 0cm 0.5cm 1cm, clip]
        {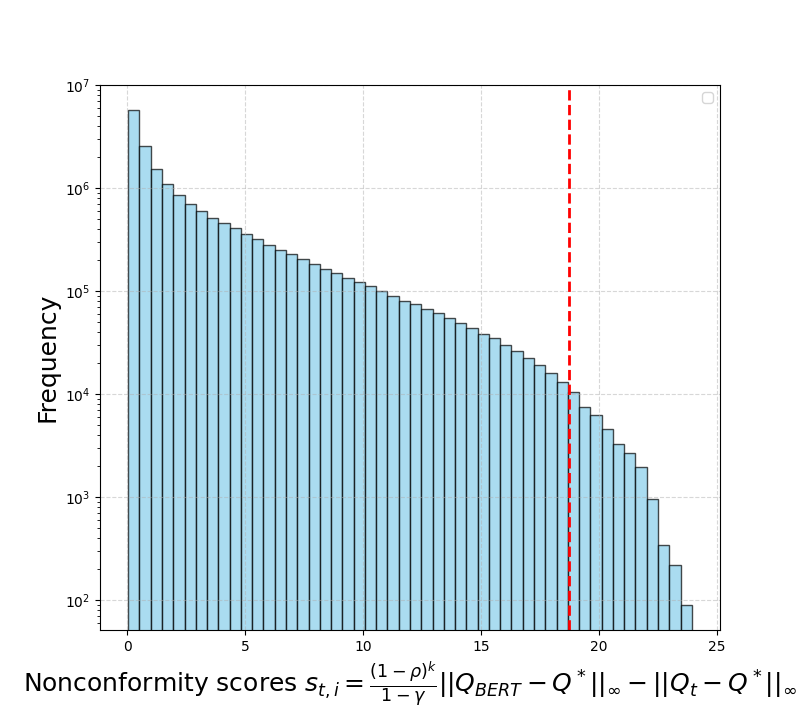}
        \caption{$13\times13$ maze}
        \label{fig:histogram_13x13}
    \end{subfigure}

    \caption{Empirical distributions of $s_{t,i}$ for the BERT-based
    warm start in mazes of sizes $11\times11$ and $13\times13$, with
    the 90\textsuperscript{th}-percentile shown.}
    \label{fig:histogram}
    \vspace{-7mm}
\end{figure}
\begin{table}[h!]
\centering
\small
\setlength{\tabcolsep}{4pt}
\renewcommand{\arraystretch}{0.95}
\begin{tabular}{lc|lr}
\hline
\textbf{Parameter} & \textbf{Value} & \textbf{Reward Policy} & \textbf{Reward} \\
\hline
$SIZE$   & $\{9,11,13\}$       & Free Space       & $-0.1$   \\
$\gamma$ & $0.7$              & Obstacle         & $-1.0$   \\
$M$      & $9000$             & Goal             & $+5.0$   \\
$\alpha$ & $0.9$              & Outside Boundary & $-100.0$ \\
$R$      & $\{0.2,0.4,0.5\}$  &                  &          \\
\hline
\end{tabular}
\caption{Hyperparameters and reward policy used in robust Q-learning for numerical simulations}
\label{table:hyper_Q}
\vspace{-7mm}
\end{table}

\section{Conclusion}
This study presents a new framework to combine transformer encoder-based representation learning with robust Q-learning methods, where a BERT-based warm start seeds the algorithm with a strong initial prediction that reduces initial error and improves sample efficiency. Conformal prediction yields tighter, finite-sample, distribution-free Q-error bounds uniformly over all iterations than existing theoretical guarantees. The numerical simulations validate these claims, demonstrating that the convergence behavior improves with the BERT-based warm start, and that the conformal bounds consistently track the true error more closely than prior results.
\vspace{-2mm}

\bibliographystyle{IEEEtran}
\bibliography{refs}

\end{document}